\documentclass[a4paper,10pt,notitlepage]{article}
\usepackage[latin1]{inputenc}
\usepackage[american]{babel}
\usepackage{tikz}
\usetikzlibrary{positioning}
\usepackage{multicol,parskip}
\newtheorem{definition}{Definition}
\newtheorem{remark}{Remark}
\newtheorem{theorem}{Theorem}[section]

\newtheorem{lemma}[theorem]{Lemma}
\newenvironment{proof}{\paragraph{Proof:}}{\hfill$\square$}
\oddsidemargin 0.0in
\textwidth 6.5in
\usepackage{fancyhdr, amssymb, amsfonts, amsmath, mathtools, hyperref, lineno, blkarray}

\DeclarePairedDelimiter\abs{\lvert}{\rvert}%
\title{Higher order co-occurrence tensors for hypergraphs via face-splitting}
\author{Bryan Bischof}

\begin{document}
\maketitle
  \begin{abstract}
    A popular trick for computing a pairwise \emph{co-occurrence matrix} is the product of an incidence matrix and its transpose. We present an analog for higher order tuple co-occurrences using the \emph{face-splitting product}, or alternately known as the \emph{transpose Khatri-Rao product}. These higher order co-occurrences encode the commonality of tokens in the company of other tokens, and thus generalize the mutual information commonly studied. We demonstrate this tensor's use via a popular NLP model, and hypergraph models of similarity. 
  \end{abstract}

Studying an implicit meaning of a collection of things via their relationships with other things of the same type is a popular technique. When relationships are direct and between only two objects at a time, it's natural to use graphs. In that setting, the co-occurrences, or counts of connectedness, are a natural statistic to represent these relationships. However in many settings we are interested in relationships between more than two things; when it is useful to distinguish these higher order relationships from simply sets of pair-wise relationships, the co-occurrence statistic similarly can be extended.

This note is intended to reproduce a popular trick ($C_{\mathcal{I}}=\mathcal{I}^T\cdot \mathcal{I}$) used for co-occurrence matrices, but for the higher order co-occurrences tensor. While this trick is more of a curiosity than anything, the organization of higher order co-occurrences as a tensor provides inspiration for some related settings. 

For the sake of making this note of interest, we have included some additional bait: first, we write some simple examples for these tensor operations to provide a resource with less annoying notation, second, we suggest a fun new word embedding that extends some existing ideas in the field to take advantage of unused data that is easily available. In particular, the second construction allows for a simple model of context-based item-item embeddings. We encourage the reader to use Appendix \ref{A1} while reading the definitions, as it's much simpler.

\section*{Acknowledgements}

Eric Bunch and Janu Verma graciously talked through many of these ideas with me during preparation. Cassandra Jacobs, Dan Marthaler, Leland McInnes, Sven Schmit, Reza Sohrabi, and Hector Yee provided helpful comments and suggestions. Many others endured my explanations of these constructions.

\section{Tensor operations and constructions}

\begin{definition}
    An \emph{incidence structure} is a collection of sets, $\left\lbrace s_i \right\rbrace_{i\in I}$, containing nodes, $\left\lbrace x_j \right\rbrace_{j\in J}$, where $I$ and $J$ are indexing sets. 

    The associated \emph{incidence matrix}, $\mathcal{I}$, is the binary matrix with rows indexed by sets, and columns indexed by nodes, such that elements
    \[ 
        e_{s_i,x_j}=\begin{cases} 
          1 & x_j \in s_i \\
          0 & \text{otherwise} 
       \end{cases}
    \]

    The \emph{co-occurrences of} $x_a$ \emph{ and } $x_b$ is the order of the set $\left\lbrace s_i \mid x_a \in s_i \text{ and } x_b \in s_i \right\rbrace$.
\end{definition}

\begin{lemma}
    Incidence structures and their associated matrices are in correspondence.
\end{lemma}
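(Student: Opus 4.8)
The plan is to exhibit the correspondence as an explicit bijection between incidence structures carried by a fixed pair of index sets $(I,J)$ and binary matrices whose rows are indexed by $I$ and whose columns are indexed by $J$, and then check that the two directions compose to identities. First I would make the forward map precise: to an incidence structure $\{s_i\}_{i\in I}$ on nodes $\{x_j\}_{j\in J}$ assign the matrix $\mathcal{I}$ with entries $e_{s_i,x_j}$ exactly as in the Definition. This is well-defined once we agree that the data of an incidence structure is the \emph{indexed family} $i\mapsto s_i$ together with the labeling $j\mapsto x_j$, so that each $s_i\subseteq\{x_j\}_{j\in J}$ is recorded by, and recoverable from, its indicator vector over $J$.

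For the reverse map, given a binary matrix $M=(m_{ij})_{i\in I,\,j\in J}$ I would set $s_i := \{\, x_j : m_{ij}=1 \,\}$ for each $i\in I$, producing an incidence structure with the prescribed node set. The next step is to verify both round trips. Starting from an incidence structure, forming $\mathcal{I}$, and then reading off $s_i' := \{x_j : e_{s_i,x_j}=1\}$ returns $s_i' = s_i$, since $x_j\in s_i \iff e_{s_i,x_j}=1$ by definition of the entries. Conversely, starting from $M$, forming the sets $s_i$, and then taking the associated incidence matrix returns in position $(i,j)$ the value $1$ precisely when $x_j\in s_i$, i.e.\ when $m_{ij}=1$; so the matrix obtained is $M$ again. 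Hence the two constructions are mutually inverse.

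The only real subtlety — and the step I expect to need the most care — is the bookkeeping around indexing. Repeated sets in the family must map to identical rows, and "the" incidence matrix is canonical only after fixing enumerations of $I$ and $J$; so the fully precise statement is a bijection of isomorphism classes, where an isomorphism of incidence structures is a pair of bijections of $I$ and $J$ intertwining membership, matched on the matrix side by simultaneous row and column permutation. I would close by remarking that no ambiguity beyond this relabeling can arise: the indicator-vector description shows the forward map is injective on families, and the reverse construction shows it is surjective onto binary matrices, which together with the round-trip checks establishes the claimed correspondence.
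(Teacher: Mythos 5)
Your proof is correct; the paper actually states this lemma without any proof, treating it as immediate from the definition, and your argument (indicator-vector forward map, set-of-ones reverse map, two round-trip checks) is exactly the routine verification being left implicit. Your closing caveat about fixing enumerations of $I$ and $J$ and working up to simultaneous row/column permutation is a genuine point of care that the paper glosses over, and it is the right way to make the word ``correspondence'' precise.
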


The ``trick'' we've been speaking of:
\begin{lemma}
    Let $C_{\mathcal{I}}$ be the co-occurrences matrix, i.e. the matrix with rows and columns indexed by $\left\lbrace x_j \right\rbrace_{j\in J}$ and elements the co-occurrences of the indices. Then:
    \begin{equation}
        C_{\mathcal{I}}=\mathcal{I}^T\cdot \mathcal{I}
    \end{equation}  
\end{lemma}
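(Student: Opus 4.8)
The plan is to prove the identity entrywise. Fix node indices $a,b\in J$. Unwinding the definitions of matrix transpose and matrix product, the $(a,b)$ entry of $\mathcal{I}^T\cdot\mathcal{I}$ is
$\sum_{i\in I}(\mathcal{I}^T)_{a,i}\,\mathcal{I}_{i,b}=\sum_{i\in I}e_{s_i,x_a}\,e_{s_i,x_b}$,
where the $e_{s_i,x_j}\in\{0,1\}$ are the entries of $\mathcal{I}$. Here I assume tacitly that $I$ is finite, or at least that these sums --- equivalently the co-occurrence counts --- are finite, so that everything is well defined; the correspondence lemma above ensures $\mathcal{I}$ records membership faithfully, so no further data is needed.

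Next I would observe that, since each factor lies in $\{0,1\}$, the product $e_{s_i,x_a}\,e_{s_i,x_b}$ equals $1$ precisely when $e_{s_i,x_a}=1$ and $e_{s_i,x_b}=1$ --- that is, when $x_a\in s_i$ and $x_b\in s_i$ --- and equals $0$ otherwise. In other words, $e_{s_i,x_a}\,e_{s_i,x_b}$ is the indicator of the event that $s_i$ contains both $x_a$ and $x_b$: a product of $0/1$ indicators realizes their logical conjunction.

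Summing this indicator over $i\in I$ then counts exactly the sets $s_i$ with $x_a\in s_i$ and $x_b\in s_i$, so that $\sum_{i\in I}e_{s_i,x_a}\,e_{s_i,x_b}=\abs{\{\,s_i\mid x_a\in s_i\text{ and }x_b\in s_i\,\}}$, which is by definition the co-occurrence of $x_a$ and $x_b$, i.e. the $(a,b)$ entry of $C_{\mathcal{I}}$. Since $a$ and $b$ were arbitrary, the two matrices agree in every entry, hence are equal.

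I expect no real obstacle: the content is simply unwinding the definition of the matrix product together with the observation that multiplying indicator entries implements AND. The only points needing a moment's care are the index bookkeeping under the transpose (which label indexes rows, which indexes columns) and the implicit finiteness making the cardinalities meaningful; neither is substantive.
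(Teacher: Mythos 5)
Your proof is correct: the entrywise computation $\sum_{i\in I}e_{s_i,x_a}e_{s_i,x_b}$ counting the sets containing both $x_a$ and $x_b$ is exactly the content of the lemma, and your caveats about finiteness and index bookkeeping are the only points of care. The paper itself offers no proof of this lemma (it is stated as a known ``trick''), but your indicator-product argument is the standard one and is precisely the $k=2$ case of the summation $\sum^m_{j=1} e_{s_j, x_a}\cdot(\prod_v e_{s_j,v})$ that the paper does carry out in its proof of the higher-order theorem.
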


I follow the notation of \cite{KB}:

\begin{definition}
    The \emph{Khatri-Rao} product is the ``matching columnwise'' Kronecker product, i.e. 
    \begin{equation*}
        \mathbf{C} = 
            \left[
            \begin{array} { c | c | c | c}
            \mathbf{c}_1 & \mathbf{c}_2 & \dots & \mathbf{c}_k
            \end{array}
            \right]
        ,\quad
        \mathbf{D} = 
            \left[
            \begin{array} { c | c | c | c}
            \mathbf{d}_1 & \mathbf{d}_2 & \dots &  \mathbf{d}_k
            \end{array}
            \right]
        ,
    \end{equation*}
    so that:
    \begin{equation*}
        \mathbf{C} \odot \mathbf{D}= 
            \left[
            \begin{array} { c | c | c | c}
            \mathbf{c}_1 \otimes \mathbf{d}_1 & \mathbf{c}_2 \otimes \mathbf{d}_2 & \dots & \mathbf{c}_k \otimes \mathbf{d}_k
            \end{array}
            \right]
    \end{equation*}
    Similarly, the ``matching rowwise'' Kronecker product, or the \emph{face-splitting product} is defined:
    \begin{equation*}
        \text{let } \mathbf{C} = 
            \left[
            \begin{array} { c }
            \mathbf{c}_1 \\\hline
            \mathbf{c}_2 \\\hline
            \vdots \\\hline
            \mathbf{c}_k\\
            \end{array}
            \right]
        ,\text{ and } \quad
        \mathbf{D} = 
            \left[
            \begin{array} { c }
            \mathbf{d}_1\\\hline
            \mathbf{d}_2\\\hline
            \vdots \\\hline
            \mathbf{d}_k\\
            \end{array}
            \right]
        ,
    \end{equation*}
    so that:
    \begin{equation*}
        \mathbf{C} \bullet \mathbf{D}= 
            \left[
            \begin{array} { c }
            \mathbf{c}_1 \otimes \mathbf{d}_1\\\hline 
            \mathbf{c}_2 \otimes \mathbf{d}_2\\\hline
            \vdots \\\hline
            \mathbf{c}_k \otimes \mathbf{d}_k\\
            \end{array}
            \right],
    \end{equation*} 
    also known as the transposed Khatri-Rao product.
\end{definition}

We will want to move between tensors of higher order, and matrices, by moving around the elements into different shapes. This process is a generalization of matrix \emph{vectorization}.

\begin{definition}
    The \emph{mode-p unfolding} of a tensor, $\mathcal{X} \in \mathbb{R}^{I_1 \times I_2 \times \dots \times I_N}$, is denoted, $\mathbf{X}_{(p)}$, and aligns the mode-$p$ fibers into a matrix; the tensor element $(i_1, i_2, \ldots, i_N)$ maps to matrix element $(i_p, j)$ where
    \begin{equation*}
        j = 1 + \sum^{N}_{\substack{k=1\\ k\ne p}}(i_k - 1)J_k, 
        \text{ with} \quad 
        J_k = \prod^{k-1}_{\substack{m=1\\ m\ne p}}I_m.
    \end{equation*}

    Similarly, the \emph{mode-p folding} of a matrix is denoted, $\mathbf{X}_{(-p)}$, and defined\footnote{
        A general mode-p folding is not well-defined, and requires the dimension of the output tensor to be specified. For the purposes of this note, the dimensions are always clear from context.
    } such that 
    \begin{equation*}
        \mathcal{X} = (\mathbf{X}_{(p)})_{(-p)}.
    \end{equation*}

    For a matrix $\mathbf{U}\in \mathbb{R}^{J\times I_p}$, the \emph{p-mode product of a tensor with} $\mathbf{U}$, denoted $\mathcal{X}\times_p \mathbf{U}$, is the tensor $\mathcal{Y}$ such that
    \begin{equation}\label{folding-relation}
        \mathbf{Y}_{(p)} = \mathbf{U}\cdot \mathbf{X}_{(p)}.
    \end{equation}
\end{definition}

The goal is to consider a tensor which represents the higher order co-occurrences of the incidence structure represented by $\mathcal{I}$; by this we mean that for order $k$, the \emph{co-occurrences of the multiset} $\mathbf{x}=\left\lbrace x_1, x_2, \ldots, x_k \right\rbrace$ is the order of the set $\left\lbrace s_i \mid \left\lbrace x_1, x_2, \ldots, x_k \right\rbrace \subseteq s_i \right\rbrace$, which we write $c_{\mathbf{x}}$. We denote the $k$-tensor indexed in each dimension by $X$ with elements given by these counts $\mathcal{C}^k_{\mathcal{I}}$.

We utilize multisets for index purposes, but we consider inclusion as sets, e.g. 
\begin{equation*}
    c_{aab} = c_{abb} = c_{ab}.
\end{equation*}

This also means that a co-occurrence tensor of order $k$ contains elements from all lower order tensors as subtensors.

\begin{theorem}
    Write $\mathcal{I}^{\bullet {k-1}} := \underbrace{\mathcal{I}\bullet \mathcal{I} \bullet \ldots \bullet \mathcal{I}}_{{k-1}}$, then $\mathcal{C}^{k}_{\mathcal{I}} = \mathcal{I}^{\bullet {k-1}}_{(-1)} \times_1 \mathcal{I}^T$.
\end{theorem}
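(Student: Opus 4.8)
The plan is to prove this by directly comparing the entries of the two sides. First I would set up notation: fix $k$ and let $\mathcal{I}$ have rows indexed by the sets $\{s_i\}_{i\in I}$ and columns by the nodes $\{x_j\}_{j\in J}$. The key observation is that the face-splitting product $\mathcal{I}^{\bullet(k-1)}$ is again an incidence-type matrix: its rows are still indexed by $I$, while its columns are indexed by $(k-1)$-tuples $(x_{j_1},\dots,x_{j_{k-1}})\in J^{k-1}$, and by the definition of $\bullet$ together with $\otimes$ of binary row vectors, the $(i,(j_1,\dots,j_{k-1}))$ entry is $e_{s_i,x_{j_1}}\cdots e_{s_i,x_{j_{k-1}}}$, which is $1$ precisely when $\{x_{j_1},\dots,x_{j_{k-1}}\}\subseteq s_i$. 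So row $i$ of $\mathcal{I}^{\bullet(k-1)}$ is the indicator of all $(k-1)$-tuples drawn from $s_i$.

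Next I would unwind the folding. The mode-$1$ unfolding $\mathbf{X}_{(1)}$ of a $k$-tensor has rows indexed by the first mode and columns indexed by the composite index $j = 1 + \sum_{k'\ne 1}(i_{k'}-1)J_{k'}$, i.e. by $(k-1)$-tuples of node indices in exactly the ordering that the Kronecker product $\otimes$ produces. Hence I would argue that $\mathcal{I}^{\bullet(k-1)}$, viewed as the matrix $\mathbf{X}_{(1)}$ for a $k$-tensor whose first mode has size $|I|$ and whose remaining $k-1$ modes each have size $|J|$, is exactly a mode-$1$ unfolding, so that $\mathcal{I}^{\bullet(k-1)}_{(-1)}$ is well-defined and is the $k$-tensor with $(i,j_1,\dots,j_{k-1})$ entry equal to $[\{x_{j_1},\dots,x_{j_{k-1}}\}\subseteq s_i]$. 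This is the step I expect to be the main obstacle: one must check that the column ordering in the definition of mode-$p$ unfolding genuinely matches the iterated Kronecker ordering of $\bullet$, and handle the footnoted ambiguity in mode-$p$ folding by pointing out that here the output dimensions ($|I|\times|J|\times\cdots\times|J|$) are fixed by context.

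Then I would apply the $1$-mode product. By the folding relation \eqref{folding-relation}, $\mathcal{Y} = \mathcal{I}^{\bullet(k-1)}_{(-1)} \times_1 \mathcal{I}^T$ is the tensor with $\mathbf{Y}_{(1)} = \mathcal{I}^T\cdot \mathcal{I}^{\bullet(k-1)}$. Computing the $(j_0,(j_1,\dots,j_{k-1}))$ entry of this matrix product gives $\sum_{i\in I} e_{s_i,x_{j_0}}\cdot [\{x_{j_1},\dots,x_{j_{k-1}}\}\subseteq s_i] = \#\{s_i \mid \{x_{j_0},x_{j_1},\dots,x_{j_{k-1}}\}\subseteq s_i\}$, which is precisely $c_{\mathbf{x}}$ for $\mathbf{x}=\{x_{j_0},x_{j_1},\dots,x_{j_{k-1}}\}$, and folding back along mode $1$ recovers the $k$-tensor $\mathcal{C}^k_{\mathcal{I}}$.

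Finally I would note the degenerate/consistency checks: for $k=2$ the statement reduces to $\mathcal{C}^2_{\mathcal{I}} = \mathcal{I}^T\cdot\mathcal{I}$, recovering the earlier lemma, and the multiset-versus-set convention (that repeated indices collapse, $c_{aab}=c_{ab}$) is automatically respected because a product of $\{0,1\}$-entries is idempotent, so a repeated column index contributes no new constraint. I would also remark that the identity $\mathcal{I}^{\bullet k} = \mathcal{I}\bullet \mathcal{I}^{\bullet(k-1)}$ lets one set this up inductively if one prefers, but the direct entrywise computation above is the cleanest route.
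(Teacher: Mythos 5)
Your proposal is correct and takes essentially the same route as the paper's proof: a direct entrywise computation showing that $\mathcal{I}^T\cdot\mathcal{I}^{\bullet(k-1)}$ has entries counting the sets containing the full $k$-tuple, identified as the mode-$1$ unfolding of $\mathcal{C}^{k}_{\mathcal{I}}$ and concluded via the folding relation \eqref{folding-relation}. You are somewhat more explicit than the paper about checking that the Kronecker column ordering matches the unfolding convention and about the multiset/idempotence point, but the argument is the same.
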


\begin{proof}
    Assume $\abs{I}=n$, and $\abs{J}=m$. Consider 
    \begin{equation*}
        \mathcal{I} = 
            \left[
            \begin{array} {c c}
            \mathbf{s}_1 \\\hline
            \mathbf{s}_2 \\\hline
            \vdots \\\hline
            \mathbf{s}_m\\
            \end{array}
            \right],
    \end{equation*}
    then, iterating the face-splitting product to compute row-wise tensor products,
    \begin{equation*}
        \mathcal{I}^{\bullet {k-1}} = 
            \left[
                \underbrace{
                    \begin{array} { c }
                    \mathbf{s}_1 \otimes \mathbf{s}_1 \otimes \ldots \otimes \mathbf{s}_1\\\hline 
                    \mathbf{s}_2 \otimes \mathbf{s}_2 \otimes \ldots \otimes \mathbf{s}_2\\\hline
                    \vdots \\\hline
                    \mathbf{s}_m \otimes \mathbf{s}_m \otimes \ldots \otimes \mathbf{s}_m\\
                    \end{array}
                }_{n\cdot (k-1)}
            \right].
    \end{equation*} 
    and thus for $1\leq v \leq n\cdot(k-1)$ the multi-index in each row, we see that
    \begin{equation*}
        \mathcal{I}^T \cdot \left(\mathcal{I}^{\bullet {k-1}}\right)_v = 
            \left[
                \begin{array} { c }
                \sum^m_{j=1} e_{s_j, x_1} \cdot (\prod_v e_{s_j,v}) \\\hline
                \sum^m_{j=1} e_{s_j, x_2} \cdot (\prod_v e_{s_j,v}) \\\hline 
                \vdots \\\hline
                \sum^m_{j=1} e_{s_j, x_n} \cdot (\prod_v e_{s_j,v}) \\ 
                \end{array}
            \right]={\mathcal{C}^{k}_{\mathcal{I}}}_{(i)},
    \end{equation*} 
    the $i$'th mode unfolding. By \ref{folding-relation}, the result follows. 
\end{proof}

\section{Hypergraphs and Embeddings}

\begin{definition}
    A \emph{hypergraph} is a pair $(S,X)$ consisting of \emph{edges}, $S$, and \emph{nodes}, $X$, such that $S$ is a collection of sets, $\left\lbrace s_i \right\rbrace_{i\in I}$, containing nodes, $X$, $\left\lbrace x_j \right\rbrace_{j\in J}$ and if $x_j\in S_i$ we say $x_j$ is incident on $S_i$.

    A \emph{weighted hypergraph} is one such that each edge $s_i$ has an associated weight $w_i\in\mathbb{R}$.
\end{definition}

Notice that a hypergraph is another name for an incidence structure, and is simply a generalization of a graph such that edges in a graph are  permitted to contain more than two nodes. We now use the higher order co-occurances to build embeddings of the nodes which reflects properties of the incidence structure via distances in the embeddings.

A few examples for application:
\begin{remark}
    Given a collection of products, and historical sales, each transaction and with its collection of objects is an edge in a hypergraph of products as nodes. The co-occurance tensor describes ``bought together'' frequencies.
\end{remark}

\begin{remark} 
    Provided words $\Omega = \left\lbrace\omega_t\right\rbrace_{t\in T}$ from a corpus in sequence, and $\kappa(\omega_t)=\left\lbrace \omega_{t-r},\ldots,\omega_t,\ldots,\omega_{t+r}\right\rbrace$ contexts associated to $w_i$, then for $K$ the set of all such $\kappa(\omega_t)$, $(K, \Omega)$ is a hypergraph.
\end{remark}

\begin{remark}
    Skip-gram negative sampling (SGNS)\cite{Mikolov} framed as a shallow neural network, is used to construct the popular word-embedding model \emph{word2vec}, from the pointwise mutual information (PMI) matrix-a modification of the context-incident co-occurrences matrix. These PMI values are defined as $p_{x_i, x_j}=\log\left(\dfrac{c_{x_i, x_j}\cdot \abs{X}}{c_{x_i, x_i}\cdot c_{x_j,x_j}}\right)$. \emph{word2vec} is a representation of the words in some corpus, of small dimension relative to corpus size, such that vector operations frequently correspond to syntactical operations.
\end{remark}

If we wish to utilize our new co-occurrences tensor, we need higher order PMI values also.
\begin{definition}
    \emph{Multivariant pointwise mutual information} for a tuple $\mathbf{x}=\left\lbrace x_1, x_2, \ldots, x_k \right\rbrace$ is the ratio $p_{\mathbf{x}}=\log\left(\dfrac{c_{\mathbf{x}}\cdot \abs{X}}{\prod_{1\leq i \leq k}c_{x_k}}\right)$ referred to as \emph{Specific Correlation} \cite{VdC}
\end{definition}

\begin{remark}
    Similarly, one can consider a loss function built from ratios of co-occurrences as in (GloVe) \cite{Glove}. It's interesting to extend those notions to also include these multivariate versions.
\end{remark}

\begin{remark}
    A popular question about \emph{word2vec} models is if the optima obtained via the stochastic gradient descent and the shallow NN is in some way that which one obtains via some implicit matrix factorization of these co-occurrence matrices and/or the PMI matrices. \cite{OY} introduced the notion and later \cite{Li} described as an explicit matrix-factorization of the PMI matrix.
\end{remark}

\begin{definition}
    For $\Omega$ a corpus and $\mathcal{C}^{3}_{\Omega}$ the associated co-occurrence tensor, consider ${\mathcal{C}^{3}_{\Omega}}_{x_i:}$: the $x_i$ indexed array of $2$ tensors. For a given matrix-distance(for example Frobenius norm), $\mathcal{L}$, we construct 
    \begin{equation*}
        \min\limits_{\mathbf{Y_{i}},\mathbf{Z_{i}}}\mathcal{L}\left(\mathcal{C}^{3}_{\Omega}, \mathbf{Y_{i}}\cdot\mathbf{Z_{i}}\right)
    \end{equation*}  
    for fixed shapes, i.e. a factorization of each matrix ${\mathcal{C}^{3}_{\Omega}}_{x_i:}$, then $\left\lbrace Y_{i}, Z_{i}\right\rbrace$ is a fiber space of embeddings for ${\mathcal{C}^{3}_{\Omega}}$.
\end{definition}

This definition is motivated by the desire to include higher order co-incidence in the spaces we frequently use to simplify relationships between nodes via low-dimensional embeddings. It's interesting to observe that while $\mathcal{C}^{p}_{\Omega} \subsetneq \mathcal{C}^{q}_{\Omega}$ for all $p<q$ as sets of elements, no face properly contains the lower order faces(see Appendix \ref{B1} for a visualization). In this way, we fail to get a proper fiber embedding.\footnote{
    Note that embeddings constructed via neural networks do not parallelize well, especially for large corpi. Both computation of higher co-occurance and matrix factorization are highly parallelizable. 
}

We can extend this notion further, and define a sequence of these embeddings, $\left\lbrace{\mathcal{C}^{r}_{\Omega}}\right\rbrace_{2\leq r\leq n-1}$ with a fixed embedding dimension $d$. From each of these fiber spaces we can construct a map $\mathcal{O}^{r}:X^r\rightarrow \mathbb{R}^d$ in the following way: consider the embedding with index $x\in X^r$, and return the vector value of $x$ in that image. This sequence of maps return vectors of the same dimension, and thus, their sum lies in $\mathbb{R}^d$.

Due to our constructions above, we can easily integrate the notion of context-based embeddings:

\begin{remark}
    If instead one wishes to compute metadata-contextual similarity via a context mapping, $\mathcal{R}$, from hyperedges to a set of contexts $\left\lbrace y_k \right\rbrace_{r\in R}$, one can instead compute $\mathcal{I}^{\bullet {k-1}}_{(-1)} \times_1 \mathcal{R}^T$. Tensor factorizations, or n-mode fiber factorizations\cite{Moody} can then be used for analogous similarities with metadata enrichment.
\end{remark}

Analogous questions for matrix factorizations of the ratios of pointwise mutual information, and contextual embeddings are the subject of future work by the author.\footnote{
    Late during the preparation of this note, I became aware of the exciting work of \cite{Disco}. I hope to investigate their work and update this note with relevant connections in the future.
}

\pagebreak

\appendix

\section{Examples}\label{A1}

Unfortunately, the notation for many tensor operations are arcane. Here we present a few computation to aid in reading the previous. 

The \emph{order of a tensor} is the number of dimensions the ``cube of numbers'' representing the tensor requires, i.e. the depth of the brackets that would be printed from numpy\footnote{this formulation is due to Vicki Boykis}.

Let's begin with a corpus of words consisting of three sentences:
\begin{center}
    \begin{linenumbers}
        I like math \\
        You like math \\
        I like you.
    \end{linenumbers}
\end{center}

This provides us with an incidence matrix:

\[
    \mathcal{I}=\begin{blockarray}{ccccc}
        x_1=\text{I} & x_2=\text{like} & x_3=\text{math} & x_4=\text{you} \\
        \begin{block}{[cccc]c}
            1 & 1 & 1 & 0 & s_1 \\
            0 & 1 & 1 & 1 & s_2 \\
            1 & 1 & 0 & 1 & s_3 \\
        \end{block}
    \end{blockarray}
\]

which has the co-occurrence matrix:

\[
    \mathcal{C}_{\mathcal{I}}=\begin{blockarray}{ccccc}
        x_1=\text{I} & x_2=\text{like} & x_3=\text{math} & x_4=\text{you} \\
        \begin{block}{[cccc]c}
            2 & 2 & 1 & 1 & x_1 \\
            2 & 3 & 2 & 2 & x_2 \\
            1 & 2 & 2 & 1 & x_3 \\
            1 & 2 & 1 & 2 & x_4 \\
        \end{block}
    \end{blockarray}
\]

We compute the face-splitting square of $\mathcal{I}$:

\[
    \mathcal{I}\bullet \mathcal{I}=\begin{blockarray}{cccccccc}
        x_1\otimes x_1 & x_1\otimes x_2 & x_1\otimes x_3 & x_1\otimes x_4 & x_2\otimes x_1 & \ldots & x_4\otimes x_4 & \\
        \begin{block}{[ccccccc]c}
            1 & 1 & 1 & 0 & 1 & \ldots & 0 & s_1 \\
            0 & 0 & 0 & 0 & 0 & \ldots & 1 & s_2 \\
            1 & 1 & 0 & 1 & 1 & \ldots & 1 & s_3 \\
        \end{block}
    \end{blockarray}
\]

in the $\mathcal{I}^T\cdot \mathcal{I}^{\bullet {2}}$, the first column computation looks like this:

\begin{equation*} 
    \left[
        \begin{array} { c }
        e_{s_1, x_1\otimes x_1} \cdot e_{s_1, x_1} + e_{s_2, x_1\otimes x_1} \cdot e_{s_2, x_1} + e_{s_3, x_1\otimes x_1} \cdot e_{s_3, x_1} \\\hline
        e_{s_1, x_1\otimes x_1} \cdot e_{s_1, x_2} + e_{s_2, x_1\otimes x_1} \cdot e_{s_2, x_2} + e_{s_3, x_1\otimes x_1} \cdot e_{s_3, x_2} \\\hline 
        e_{s_1, x_1\otimes x_1} \cdot e_{s_1, x_3} + e_{s_2, x_1\otimes x_1} \cdot e_{s_2, x_3} + e_{s_3, x_1\otimes x_1} \cdot e_{s_3, x_3} \\\hline 
        e_{s_1, x_1\otimes x_1} \cdot e_{s_1, x_4} + e_{s_2, x_1\otimes x_1} \cdot e_{s_2, x_4} + e_{s_3, x_1\otimes x_1} \cdot e_{s_3, x_4} \\
        \end{array} 
    \right] = \left[
        \begin{array} { c }
        2 \\\hline
        2 \\\hline 
        1 \\\hline 
        1 \\ 
        \end{array} 
    \right] = \left[
        \begin{array} { c }
        x_1 \otimes x_1 \otimes x_1 \\\hline
        x_1 \otimes x_1 \otimes x_2 \\\hline 
        x_1 \otimes x_1 \otimes x_3 \\\hline 
        x_1 \otimes x_1 \otimes x_4 \\ 
        \end{array} 
    \right].
\end{equation*}

This is a $16\times4$ matrix, and when we do the first mode folding, we arrive at the co-occurrence 3-tensor with frontal slices, the matrices associated to fixing the last index, ${\mathcal{C}_{\mathcal{I}}}_{::j}$:

\[
    {\mathcal{C}_{\mathcal{I}}}_{::1}=\begin{blockarray}{ccccc}
        x_1 & x_2 & x_3 & x_4 \\
        \begin{block}{[cccc]c}
            2 & 2 & 1 & 1 & x_1 \\
            2 & 2 & 1 & 1 & x_2 \\
            1 & 1 & 1 & 0 & x_3 \\
            1 & 1 & 0 & 1 & x_4 \\
        \end{block}
    \end{blockarray}
\], 
\[
    {\mathcal{C}_{\mathcal{I}}}_{::2}=\begin{blockarray}{ccccc}
        x_1 & x_2 & x_3 & x_4 \\
        \begin{block}{[cccc]c}
            2 & 2 & 1 & 1 & x_1 \\
            2 & 3 & 2 & 2 & x_2 \\
            1 & 2 & 2 & 1 & x_3 \\
            1 & 2 & 1 & 2 & x_4 \\
        \end{block}
    \end{blockarray}
\],
\[
    {\mathcal{C}_{\mathcal{I}}}_{::3}=\begin{blockarray}{ccccc}
        x_1 & x_2 & x_3 & x_4 \\
        \begin{block}{[cccc]c}
            1 & 1 & 1 & 0 & x_1 \\
            1 & 2 & 2 & 1 & x_2 \\
            1 & 2 & 2 & 1 & x_3 \\
            0 & 1 & 1 & 1 & x_4 \\
        \end{block}
    \end{blockarray}
\], 
\[
    {\mathcal{C}_{\mathcal{I}}}_{::4}=\begin{blockarray}{ccccc}
        x_1 & x_2 & x_3 & x_4 \\
        \begin{block}{[cccc]c}
            1 & 1 & 0 & 1 & x_1 \\
            1 & 2 & 1 & 2 & x_2 \\
            0 & 1 & 2 & 1 & x_3 \\
            1 & 2 & 1 & 2 & x_4 \\
        \end{block}
    \end{blockarray}
\]

\section{Highest order co-occurrences images}\label{B1}

The pairwise co-occurrences in the 3-tensor are a little challenging to visualize as they do not lie in one face. For the sake of convenience here is the 3-tensor with cells corresponding to pairwise co-occurrences, for a incidence structure with six nodes.

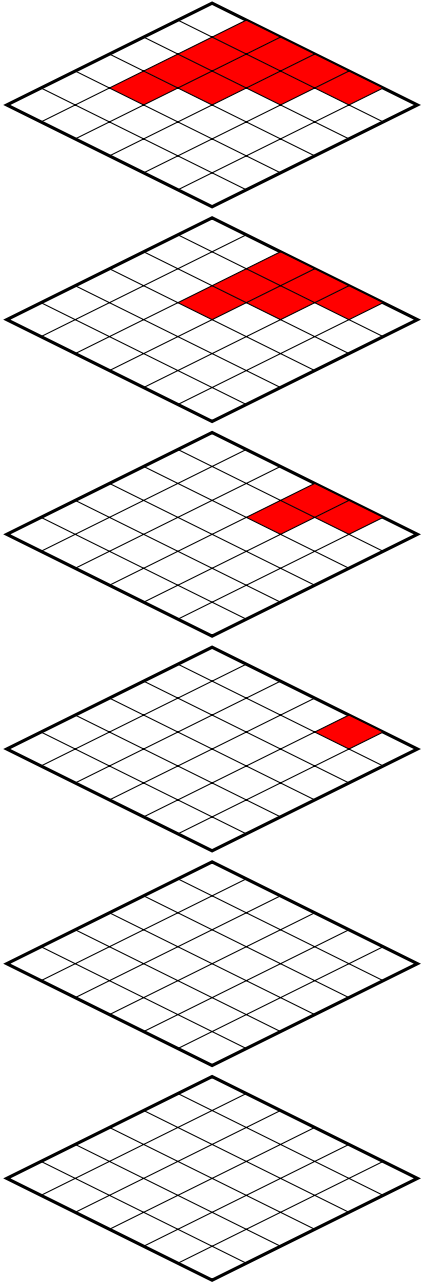
\begin{figure}\begin{tikzpicture}[scale=.9,every node/.style={minimum size=1cm},on grid]
\begin{scope}[
            yshift=0,every node/.append style={
                yslant=0.5,xslant=-1},yslant=0.5,xslant=-1
                         ]
            \fill[white,fill opacity=.75] (0,0) rectangle (3.0,3.0);
            \draw[black,very thick] (0,0) rectangle (3.0,3.0);
            \fill[red] (1.0,2.5) rectangle (1.5,2.0);
    \fill[red] (1.5,2.5) rectangle (2.0,2.0);
    \fill[red] (2.0,2.5) rectangle (2.5,2.0);
    \fill[red] (2.5,2.5) rectangle (3.0,2.0);
    \fill[red] (1.5,2.0) rectangle (2.0,1.5);
    \fill[red] (2.0,2.0) rectangle (2.5,1.5);
    \fill[red] (2.5,2.0) rectangle (3.0,1.5);
    \fill[red] (2.0,1.5) rectangle (2.5,1.0);
    \fill[red] (2.5,1.5) rectangle (3.0,1.0);
    \fill[red] (2.5,1.0) rectangle (3.0,0.5);
            \draw[step=5mm, black] (0,0) grid (3.0,3.0);
        \end{scope}
        
\begin{scope}[
            yshift=-90,every node/.append style={
                yslant=0.5,xslant=-1},yslant=0.5,xslant=-1
                         ]
            \fill[white,fill opacity=.75] (0,0) rectangle (3.0,3.0);
            \draw[black,very thick] (0,0) rectangle (3.0,3.0);
            \fill[red] (1.5,2.0) rectangle (2.0,1.5);
    \fill[red] (2.0,2.0) rectangle (2.5,1.5);
    \fill[red] (2.5,2.0) rectangle (3.0,1.5);
    \fill[red] (2.0,1.5) rectangle (2.5,1.0);
    \fill[red] (2.5,1.5) rectangle (3.0,1.0);
    \fill[red] (2.5,1.0) rectangle (3.0,0.5);
            \draw[step=5mm, black] (0,0) grid (3.0,3.0);
        \end{scope}
        
\begin{scope}[
            yshift=-180,every node/.append style={
                yslant=0.5,xslant=-1},yslant=0.5,xslant=-1
                         ]
            \fill[white,fill opacity=.75] (0,0) rectangle (3.0,3.0);
            \draw[black,very thick] (0,0) rectangle (3.0,3.0);
            \fill[red] (2.0,1.5) rectangle (2.5,1.0);
    \fill[red] (2.5,1.5) rectangle (3.0,1.0);
    \fill[red] (2.5,1.0) rectangle (3.0,0.5);
            \draw[step=5mm, black] (0,0) grid (3.0,3.0);
        \end{scope}
        
\begin{scope}[
            yshift=-270,every node/.append style={
                yslant=0.5,xslant=-1},yslant=0.5,xslant=-1
                         ]
            \fill[white,fill opacity=.75] (0,0) rectangle (3.0,3.0);
            \draw[black,very thick] (0,0) rectangle (3.0,3.0);
            \fill[red] (2.5,1.0) rectangle (3.0,0.5);
            \draw[step=5mm, black] (0,0) grid (3.0,3.0);
        \end{scope}
        
\begin{scope}[
            yshift=-360,every node/.append style={
                yslant=0.5,xslant=-1},yslant=0.5,xslant=-1
                         ]
            \fill[white,fill opacity=.75] (0,0) rectangle (3.0,3.0);
            \draw[black,very thick] (0,0) rectangle (3.0,3.0);
            
            \draw[step=5mm, black] (0,0) grid (3.0,3.0);
        \end{scope}
        
\begin{scope}[
            yshift=-450,every node/.append style={
                yslant=0.5,xslant=-1},yslant=0.5,xslant=-1
                         ]
            \fill[white,fill opacity=.75] (0,0) rectangle (3.0,3.0);
            \draw[black,very thick] (0,0) rectangle (3.0,3.0);
            
            \draw[step=5mm, black] (0,0) grid (3.0,3.0);
        \end{scope}
        
\end{tikzpicture}
    \caption{Tensor 6,3}
    \end{figure}


\begin{thebibliography}{widest entry}
    \bibitem[Bradley et. al.]{Disco} Tai-Danae Bradley, Martha Lewis, Jade Master, and Brad Theilman. Translating and Evolving: Towards a Model of Language Change in DisCoCat. \url{https://arxiv.org/abs/1811.11041}, November 2018.
    \bibitem[Kolda and Bader]{KB} T. G. Kolda, B. W. Bader. Tensor Decompositions and Applications. SIAM Review, Vol. 51, No. 3, pp. 455-500, 2009. https://doi.org/10.1137/07070111X
    \bibitem[Li et al., 2015]{Li} Yitan Li, Linli Xu, Fei Tian, Liang Jiang, Xiaowei Zhong, and Enhong Chen. Word Embedding Revisited: A New Representation Learning and Explicit Matrix Factorization Perspective. Proceedings of the Twenty-Forth International Joint Conference on Artificial Intelligence. 2015.
    \bibitem[Mikolov et al., 2013]{Mikolov} Tomas Mikolov, Ilya Sutskever, Kai Chen, Greg Corrado, and Jeff Dean. 2013. Distributed representations of words and phrases and their compositionality. In Advances in Neural Information Processing Systems 26, pages 3111-3119.
    \bibitem[Omer and Yoav, 2014]{OY} Levy Omer and Goldberg Yoav. Neural word embedding as implicit matrix factorization. Advances in Neural Information Processing Systems, 2014.
    \bibitem[Pennington et al.]{Glove} Jeffrey Pennington, Richard Socher, Christopher Manning. Glove: Global Vectors for Word Representation. Proceedings of the 2014 Conference on Empirical Methods in Natural Language Processing. 2014.
    \bibitem[Van de Cruys]{VdC} Tim van de Cruys. Two multivariate generalizations of pointwise mutual information. In C. Biemann \& E. Giesbrecht(Eds.), Proceedings of the Workshop on Distributional Semantics and Compositionality (pp. 16 - 20 ). Stroudsburg, PA : Association for Computational Linguistics . 
    \bibitem[Word Tensors]{Moody} Chris Moody. Word Tensors. \url{https://multithreaded.stitchfix.com/blog/2017/10/25/word-tensors/}
\end{thebibliography}
\end{document}